\documentclass[sigconf]{acmart}
\AtBeginDocument{%
  }

\setcopyright{acmlicensed}
\copyrightyear{2018}
\acmYear{2018}
\acmDOI{XXXXXXX.XXXXXXX}
\acmConference[Conference acronym 'XX]{Make sure to enter the correct
  conference title from your rights confirmation email}{June 03--05,
  2018}{Woodstock, NY}
\acmISBN{978-1-4503-XXXX-X/2018/06}



\usepackage{mathtools}

\usepackage{hyperref}
\usepackage{url}

\usepackage{natbib}
\usepackage{graphicx}
\usepackage[utf8]{inputenc} 
\usepackage[T1]{fontenc}    
\usepackage{hyperref}       
\usepackage{url}            
\usepackage{booktabs}       
\usepackage{amsfonts}       
\usepackage{nicefrac}       
\usepackage{microtype}      
\usepackage{xcolor}         
\usepackage{amsmath}
\usepackage{enumitem}
\usepackage{multirow}
\usepackage{threeparttable}
\usepackage{subfig}
\usepackage{circledsteps}
\usepackage{amsthm}
\usepackage{wrapfig2}

\theoremstyle{plain}
\newtheorem{theorem}{Theorem}
\usepackage[textsize=tiny]{todonotes}
\usepackage[outercaption]{sidecap}
\usepackage{multirow}
\usepackage{xcolor}
\usepackage{graphicx} 

\begin{document}




\newtheorem{algo}{Algorithm}

\newcommand{\overbar}[1]{\mkern 1.5mu\overline{\mkern-1.5mu#1\mkern-1.5mu}\mkern 1.5mu}

\newcommand{\outline}[1]{{\color{brown}#1}}
\newcommand{\rev}[1]{{\color{blue}#1}}
\newcommand{\remove}[1]{{\sout{#1}}}
\newcommand{\cut}[1]{{}}
\newcommand{\red}[1]{{\color{red}#1}} 
\newcommand{\blue}[1]{{\color{blue}#1}} 

\newcommand{\method}{\text{E2E-GRec }}
\newcommand{\tA}{{\tilde{\vA}}}
\newcommand{\tD}{{\tilde{\vD}}}
\newcommand{\tL}{{\tilde{\vL}}}
\newcommand{\hA}{{\hat{\vA}}}
\newcommand{\hD}{{\hat{\vD}}}
\newcommand{\hL}{{\hat{\vL}}}
\newcommand{\tDelta}{{\tilde{\Delta}}}
\newcommand{\Xin}{{\vX_{\text{in}}}}
\newcommand{\Xinit}{{\vX_{\text{init}}}}
\newcommand{\tin}{{\text{in}}}
\newcommand{\tinit}{{\text{init}}}


\newcommand{\va}{{\mathbf{a}}}
\newcommand{\vb}{{\mathbf{b}}}
\newcommand{\vc}{{\mathbf{c}}}
\newcommand{\vd}{{\mathbf{d}}}
\newcommand{\ve}{{\mathbf{e}}}
\newcommand{\vf}{{\mathbf{f}}}
\newcommand{\vg}{{\mathbf{g}}}
\newcommand{\vh}{{\mathbf{h}}}
\newcommand{\vi}{{\mathbf{i}}}
\newcommand{\vj}{{\mathbf{j}}}
\newcommand{\vk}{{\mathbf{k}}}
\newcommand{\vl}{{\mathbf{l}}}
\newcommand{\vm}{{\mathbf{m}}}
\newcommand{\vn}{{\mathbf{n}}}
\newcommand{\vo}{{\mathbf{o}}}
\newcommand{\vp}{{\mathbf{p}}}
\newcommand{\vq}{{\mathbf{q}}}
\newcommand{\vr}{{\mathbf{r}}}
\newcommand{\vs}{{\mathbf{s}}}
\newcommand{\vt}{{\mathbf{t}}}
\newcommand{\vu}{{\mathbf{u}}}
\newcommand{\vw}{{\mathbf{w}}}
\newcommand{\vx}{{\mathbf{x}}}
\newcommand{\vy}{{\mathbf{y}}}
\newcommand{\vz}{{\mathbf{z}}}

\newcommand{\vA}{{\mathbf{A}}}
\newcommand{\vB}{{\mathbf{B}}}
\newcommand{\vC}{{\mathbf{C}}}
\newcommand{\vD}{{\mathbf{D}}}
\newcommand{\vE}{{\mathbf{E}}}
\newcommand{\vF}{{\mathbf{F}}}
\newcommand{\vG}{{\mathbf{G}}}
\newcommand{\vH}{{\mathbf{H}}}
\newcommand{\vI}{{\mathbf{I}}}
\newcommand{\vJ}{{\mathbf{J}}}
\newcommand{\vK}{{\mathbf{K}}}
\newcommand{\vL}{{\mathbf{L}}}
\newcommand{\vM}{{\mathbf{M}}}
\newcommand{\vN}{{\mathbf{N}}}
\newcommand{\vO}{{\mathbf{O}}}
\newcommand{\vP}{{\mathbf{P}}}
\newcommand{\vQ}{{\mathbf{Q}}}
\newcommand{\vR}{{\mathbf{R}}}
\newcommand{\vS}{{\mathbf{S}}}
\newcommand{\vT}{{\mathbf{T}}}
\newcommand{\vU}{{\mathbf{U}}}
\newcommand{\vV}{{\mathbf{V}}}
\newcommand{\vW}{{\mathbf{W}}}
\newcommand{\vX}{{\mathbf{X}}}
\newcommand{\vY}{{\mathbf{Y}}}
\newcommand{\vZ}{{\mathbf{Z}}}

\newcommand{\cA}{{\mathcal{A}}}
\newcommand{\cB}{{\mathcal{B}}}
\newcommand{\cC}{{\mathcal{C}}}
\newcommand{\cD}{{\mathcal{D}}}
\newcommand{\cE}{{\mathcal{E}}}
\newcommand{\cF}{{\mathcal{F}}}
\newcommand{\cG}{{\mathcal{G}}}
\newcommand{\cH}{{\mathcal{H}}}
\newcommand{\cI}{{\mathcal{I}}}
\newcommand{\cJ}{{\mathcal{J}}}
\newcommand{\cK}{{\mathcal{K}}}
\newcommand{\cL}{{\mathcal{L}}}
\newcommand{\cM}{{\mathcal{M}}}
\newcommand{\cN}{{\mathcal{N}}}
\newcommand{\cO}{{\mathcal{O}}}
\newcommand{\cP}{{\mathcal{P}}}
\newcommand{\cQ}{{\mathcal{Q}}}
\newcommand{\cR}{{\mathcal{R}}}
\newcommand{\cS}{{\mathcal{S}}}
\newcommand{\cT}{{\mathcal{T}}}
\newcommand{\cU}{{\mathcal{U}}}
\newcommand{\cV}{{\mathcal{V}}}
\newcommand{\cW}{{\mathcal{W}}}
\newcommand{\cX}{{\mathcal{X}}}
\newcommand{\cY}{{\mathcal{Y}}}
\newcommand{\cZ}{{\mathcal{Z}}}

\newcommand{\ri}{{\mathrm{i}}}
\newcommand{\rr}{{\mathrm{r}}}

\newcommand{\EE}{{\mathbb{E}}}


\newcommand{\RR}{\mathbb{R}}
\newcommand{\CC}{\mathbb{C}}
\newcommand{\ZZ}{\mathbb{Z}}
\renewcommand{\SS}{{\mathbb{S}}}
\newcommand{\SSp}{\mathbb{S}_{+}}
\newcommand{\SSpp}{\mathbb{S}_{++}}
\newcommand{\sign}{\mathrm{sign}}
\newcommand{\vzero}{\mathbf{0}}
\newcommand{\vone}{{\mathbf{1}}}

\newcommand{\st}{{\text{s.t.}}} 
\newcommand{\St}{{\mathrm{subject~to}}} 
\newcommand{\op}{{\mathrm{op}}} 
\newcommand{\opt}{{\mathrm{opt}}} 
\newcommand{\Prob}{{\mathrm{Prob}}} 
\newcommand{\Diag}{{\mathrm{Diag}}} 
\newcommand{\dom}{{\mathrm{dom}}} 
\newcommand{\range}{{\mathbf{range}}} 
\newcommand{\tr}{{\mathrm{tr}}} 
\newcommand{\TV}{{\mathrm{TV}}} 
\newcommand{\Proj}{{\mathrm{Proj}}}
\newcommand{\prj}{{\mathrm{prj}}}
\newcommand{\prox}{\mathbf{prox}}
\newcommand{\refl}{\mathbf{refl}}
\newcommand{\reflh}{\refl^{\bH}}
\newcommand{\proxh}{\prox^{\bH}}
\newcommand{\minimize}{\text{minimize}}
\newcommand{\bgamma}{\boldsymbol{\gamma}}
\newcommand{\bsigma}{\boldsymbol{\sigma}}
\newcommand{\bomega}{\boldsymbol{\omega}}
\newcommand{\blambda}{\boldsymbol{\lambda}}
\newcommand{\bH}{\vH}
\newcommand{\bbH}{\mathbb{H}}
\newcommand{\bB}{\boldsymbol{\cB}}
\newcommand{\Tau}{\mathrm{T}}
\newcommand{\tnabla}{\widetilde{\nabla}}
\newcommand{\TDRS}{T_{\mathrm{DRS}}}
\newcommand{\TPRS}{T_{\mathrm{PRS}}}
\newcommand{\TFBS}{T_{\mathrm{FBS}}}
\newcommand{\best}{\mathrm{best}}
\newcommand{\kbest}{k_{\best}}
\newcommand{\diff}{\mathrm{diff}}
\newcommand{\barx}{\bar{x}}
\newcommand{\xgbar}{\bar{x}_g}
\newcommand{\xfbar}{\bar{x}_f}
\newcommand{\hatxi}{\hat{\xi}}
\newcommand{\xg}{x_g}
\newcommand{\xf}{x_f}
\newcommand{\du}{\mathrm{d}u}
\newcommand{\dy}{\mathrm{d}y}
\newcommand{\kconvergence}{\stackrel{k \rightarrow \infty}{\rightarrow }}
\newcommand{\Null}{\mathbf{Null}}
\newcommand{\Span}{\mathbf{span}}

\makeatletter
\let\@@span\span
\def\sp@n{\@@span\omit\advance\@multicnt\m@ne}
\makeatother

\DeclarePairedDelimiter{\dotpb}{\langle}{\rangle_{\bH}}
\DeclarePairedDelimiter{\dotpv}{\langle}{\rangle_{\vH}}
\DeclarePairedDelimiter{\dotp}{\langle}{\rangle}


\newcommand{\MyFigure}[1]{../fig/#1}

\newcommand{\bc}{\begin{center}}
\newcommand{\ec}{\end{center}}

\newcommand{\bdm}{\begin{displaymath}}
\newcommand{\edm}{\end{displaymath}}

\newcommand{\beq}{\begin{equation}}
\newcommand{\eeq}{\end{equation}}

\newcommand{\bfl}{\begin{flushleft}}
\newcommand{\efl}{\end{flushleft}}

\newcommand{\bt}{\begin{tabbing}}
\newcommand{\et}{\end{tabbing}}

\newcommand{\beqn}{\begin{align}}
\newcommand{\eeqn}{\end{align}}

\newcommand{\beqs}{\begin{align*}} 
\newcommand{\eeqs}{\end{align*}}  


\newtheorem{condition}{Condition}
\newtheorem{rul}{Rule}

\newcommand{\Ker}{\mathbf{Ker}}
\newcommand{\Range}{\mathbf{Range}} 
\title{E2E-GRec: An End-to-End Joint Training Framework for Graph Neural Networks and Recommender Systems}

\author{Rui Xue}
\email{rxue@ncsu.edu}
\affiliation{%
  \institution{North Carolina State University}
  \city{Raleigh}
  \authornote{This work was done while the author was an intern at TikTok Inc.}
  \country{US}
}

\author{Shichao Zhu}
\affiliation{%
  \institution{TikTok Inc.}
  \city{San Jose}
  \country{US}}

\author{Liang Qin}
\affiliation{%
  \institution{TikTok Inc.}
  \city{Chengdu}
  \country{CN}}

\author{Tianfu Wu}
\affiliation{%
  \institution{North Carolina State University}
  \city{Raleigh}
  \country{US}
}



\begin{abstract}
Graph Neural Networks (GNNs) have emerged as powerful tools for modeling graph-structured data and have been widely used in recommender systems, such as for capturing complex user–item and item–item relations. However, most industrial deployments adopt a two-stage pipeline: GNNs are first pre-trained offline to generate node embeddings, which are then used as static features for downstream recommender systems. This decoupled paradigm leads to two key limitations: (1) high computational overhead, since large-scale GNN inference must be repeatedly executed to refresh embeddings; and (2) lack of joint optimization, as the gradient from the recommender system cannot directly influence the GNN learning process, causing the GNN to be suboptimally informative for the recommendation task. In this paper, we propose \textbf{E2E-GRec}, a novel \textbf{end-to-end} training framework that unifies GNN training with the recommender system. Our framework is characterized by three key components:
(i) efficient subgraph sampling from a large-scale cross-domain heterogeneous graph to ensure training scalability and efficiency;
(ii) a Graph Feature Auto-Encoder (GFAE) serving as an auxiliary self-supervised task to guide the GNN to learn structurally meaningful embeddings; and
(iii) a two-level feature fusion mechanism combined with Gradnorm-based dynamic loss balancing, which stabilizes graph-aware multi-task end-to-end training.
Extensive offline evaluations, online A/B tests (e.g., a +0.133\% relative improvement in stay duration, a 0.3171\% reduction in the average
number of videos a user skips) on large-scale production data, together with theoretical analysis, demonstrate that E2E-GRec consistently surpasses traditional approaches, yielding significant gains across multiple recommendation metrics.
\end{abstract}

\maketitle

\section{Introduction}
\label{sec:intro}

Recommender systems have become an indispensable component of modern digital platforms, driving personalized content delivery across e-commerce, social media, and video-streaming services \citep{ricci2010introduction, aggarwal2016introduction,zhang2019deep}. As users are increasingly overwhelmed by massive and continuously expanding content catalogs, the ability to accurately model user preferences directly impacts user engagement, retention, and overall platform revenue \citep{covington2016deep,zhou2018deep}.

Graph Neural Networks (GNNs) have emerged as a powerful paradigm for capturing the complex relational structures inherent in graph-structured data \cite{kipf2016semi,gasteiger2018combining,velivckovic2017graph,hamilton2017inductive,wu2019simplifying,xue2023lazygnn,xue2024haste,xue2025visagnn,xue2025h} and have achieved remarkable success across multiple domains, including biological networks, transportation infrastructures, and recommender systems~\citep{fout2017protein,tang2020knowing,sankar2021graph,wu2022graph,zhang2024linear,xue2023efficient, xue2023large}.
By iteratively aggregating information from neighboring nodes, GNNs can effectively model various relations to learn expressive, high-order representations that capture both local neighborhood and global patterns. These learned representations go far beyond traditional pairwise similarities, enabling GNNs to uncover latent collaborative signals through multi-hop message passing. This capability has made GNNs particularly attractive for recommendation scenarios involving intricate user–item and item–item dependencies, where traditional models often fail to fully capture the underlying relational semantics~\citep{ying2018graph,wang2019neural}.

However, despite the recent success, most existing industrial applications of GNNs in recommender systems still adopt a two-stage training paradigm: GNNs are first trained offline to generate static item or user embeddings, which are then fed into a downstream recommendation model.
Although this decoupled design simplifies implementation and enables each component to be optimized independently, it introduces two fundamental drawbacks: First, \textit{high computational overhead.}
Since GNN embeddings are generated offline, the inference must be repeatedly executed to refresh embeddings for all nodes as the underlying graph evolves. In dynamic environments such as TikTok’s recommendation system, where new content and user interactions emerge continuously, this process necessitates frequent recomputation of billions of node embeddings, resulting in substantial infrastructure costs and increased latency. Second and more critically, \textit{lack of joint optimization.}
In the two-stage paradigm, and the recommendation system cannot provide direct gradient feedback to refine the GNN’s learned representations. This disconnect limits the expressive power of the overall system and often results in suboptimal performance.

To address these challenges, we propose \textbf{E2E-GRec}, a novel end-to-end joint training framework that unifies GNN representation learning and recommendation modeling. Unlike traditional approaches, E2E-GRec first samples a subgraph from a heterogeneous cross-domain graph using importance sampling. Then, to mitigate task misalignment, we introduce a Graph Feature Auto-Encoder (GFAE) as a complementary self-supervised learning (SSL) objective that reconstructs node features, thereby providing a direct and stable learning signal to guide the learning of GNN.
More importantly, to enable effective gradient flow from the recommendation model to the GNN in end-to-end training, we first employ two-level feature fusion that strengthens the interaction between graph representations and recommendation features.
We then adopt Gradnorm to adaptively balance the graph learning and recommendation gradients over shared parameters, preventing task dominance and ensuring stable convergence.
Our framework is built upon four key contributions:

\begin{itemize}
[leftmargin=*,noitemsep,topsep=0pt]
\itemsep0em

\item \textbf{Efficient subgraph construction.}Task-specific subgraphs are efficiently extracted from large-scale cross-domain graphs via importance sampling, enabling scalable and adaptive graph construction tailored to recommendation tasks.

\item \textbf{GNN Multi-Task learning.} A Graph Feature Auto-Encoder (GFAE) is introduced as an auxiliary SSL objective, guiding the GNN to learn structurally meaningful embeddings that are subsequently jointly optimized with the recommendation objectives.

\item \textbf{Two levels feature fusion and dynamic loss balancing.} GNN representations are integrated with recommendation features at both bottom and upper levels through complementary gating and attention based fusion strategies. Gradnorm is further employed to dynamically balance the SSL and recommendation losses, mitigating dominance and ensuring stable convergence.

\item \textbf{Comprehensive validation.} Extensive theoretical analysis, offline experiments, online A/B testing, and ablation studies collectively validate the effectiveness of the framework. Significant improvements observed on real-world production dataflow highlight its strong practical value.

\end{itemize}

\section{Related Work}
\label{sec:related}

\subsection{Recommender Systems}

Recommender systems are the cornerstone of modern industrial platforms such as e-commerce, online advertising, and content delivery \citep{zhang2019deep, covington2016deep}.
A typical recommendation pipeline consists of three stages: candidate generation, ranking, and re-ranking.
The candidate generation stage retrieves a small subset of potentially relevant items from billions of candidates using collaborative filtering, approximate nearest neighbor search, or large-scale retrieval models \citep{grbovic2018real, yi2019sampling}.
The ranking stage then scores these candidates with more sophisticated models to optimize engagement metrics such as click-through rate (CTR) or dwell time.
Finally, the re-ranking stage refines the results by incorporating business objectives 
such as diversity, fairness, and monetization \citep{xia2017adapting}.

Learning-to-Rank (LTR) approaches have become fundamental in the ranking stage of industrial recommendation systems, where the primary goal is to optimize the ordering of candidate items presented to users. Within this component, traditional pointwise approaches such as logistic regression and gradient boosted decision trees (GBDT) have been widely adopted in industry, with frameworks like XGBoost and LightGBM showing strong performance in production systems at LinkedIn and Microsoft \citep{chen2016xgboost, ke2017lightgbm}.
Pairwise methods including RankNet \citep{burges2005learning} and LambdaMART \citep{burges2010ranknet} learn relative item preferences, while listwise methods such as ListNet \citep{cao2007learning} and SoftRank \citep{taylor2008softrank} directly optimize ranking metrics (e.g., NDCG), achieving superior results in large-scale commercial systems.

\subsection{Graph Neural Networks in Industrial Recommender Systems}

Graph Neural Networks (GNNs) have emerged as powerful tools to capture higher-order connectivity and context in recommendation scenarios, where user–item interactions form complex graphs. Early industrial adoptions include PinSage \citep{ying2018graph} deployed at Pinterest, which scales GraphSAGE with random-walk sampling and neighborhood importance weighting to generate web-scale item embeddings. Similarly, LightGCN \citep{he2020lightgcn} simplify GCNs by focusing on user–item bipartite structures, achieving state-of-the-art performance in large-scale recommendation benchmarks.

In practice, large companies have developed proprietary GNN-based recommendation frameworks. AliGraph \citep{zhu2019aligraph} at Alibaba and ByteGNN \citep{zheng2022bytegnn} at ByteDance introduce distributed training and heterogeneous graph modeling to support billion-scale data, ensuring both offline training efficiency and online serving latency. Other works such as Euler (Alibaba) and DistDGL (Amazon) \cite{zheng2020distdgl} focus on distributed GNN training frameworks to meet industrial throughput requirements. However, most industrial GNN-based recommendation models  are not trained in a fully end-to-end manner. In such pipelines, the ranking gradients cannot directly influence the GNN parameters, leading to a mismatch between the objectives used for graph representation learning and the final goals optimized in ranking or re-ranking. This objective misalignment often results in suboptimal performance.

Although a few studies have attempted to enable end-to-end training by cascading GNNs with ranking models \cite{li2019graph}, these approaches typically treat the GNN outputs as augmented features that are fused with the ranking network. Consequently, the downstream task remains mismatched—while the GNN component focuses on capturing structural semantics, the ranking model instead aims to optimize task-specific metrics such as click-through rate (CTR) or relevance ranking scores.

\section{Preliminary}
\label{sec:prelim}

\textit{Notation.}
Let $G=(V,E)$ be a graph with $|V|=n$ nodes and
$|E|=m$ edges. We denote by $X\in\mathbb{R}^{n\times d}$ the node-feature
matrix, where the $i$-th row $x_i^\top$ is the $d$-dimensional feature of node
$i$. The adjacency matrix is $A\in\mathbb{R}^{n\times n}$,
with $A_{ij}>0$ iff $(i,j)\in E$; $D=\mathrm{diag}(A\mathbf{1})$ is the degree
matrix. We use $\tilde{A}=A+I$ and $\tilde{D}=D+I$ for self-loop augmentation,
and write the symmetric normalized adjacency matrix
$\hat{A}=\tilde{D}^{-1/2}\tilde{A}\tilde{D}^{-1/2}$ and the normalized Laplacian $L=D-A$ / $\hat{L}=I-\hat{A}$. For a node $u$, its
neighborhood is $\mathcal{N}(u)=\{v:(u,v)\in E\}$, and the $L$-hop neighborhood
is $\mathcal{N}^{(L)}(u)$ defined recursively. When neighbor sampling is used,
$\mathcal{S}_\ell(u)\subseteq\mathcal{N}(u)$ denotes the sampled neighbor multiset
at layer $\ell$.

\noindent A $L$-layer GNN encoder is $f_\theta:\,(X,A)\mapsto H^{(L)}\in\mathbb{R}^{n\times d_L}$,
with hidden representations $H^{(\ell)}=[h^{(\ell)}_1;\dots;h^{(\ell)}_n]$ and
$H^{(0)}=X$. A generic message-passing layer is
\begin{align}
h^{(\ell)}_u = \psi^{(\ell)}\!\Big(
    h^{(\ell-1)}_u,\, \textit{Agg}_{v\in\mathcal{N}(u)}
    (h^{(\ell-1)}_v,\,A_{uv})
\Big),
\end{align}
where $\textit{Agg}$ is an aggregation operator (e.g., mean/sum/max/attention), and $\psi^{(\ell)}$ the update function.

\subsection{GNN for Recommendations}
\label{sec:model}

In this section, we introduce two classical GNN architectures that are employed in our implementation owing to their well-established advantages in both performance and efficiency. These models have been extensively validated in prior research and are widely adopted in large-scale recommendation systems.

\noindent \textbf{GraphSAGE\cite{hamilton2017inductive}.}
GraphSAGE performs inductive neighborhood aggregation with learnable
transformations and (optionally) sampled neighbors. The initial node
representation is set as $h^{(0)}=X$ when raw features are available; otherwise
we initialize $h^{(0)}$ via a trainable embedding lookup. At layer $l$, for a
node $u$ we sample a multiset $\mathcal{S}_\ell(u)\subseteq
\mathcal{N}(u)$ of size $m_\ell$ and compute
\begin{align}
\bar{h}^{(l)}_{\mathcal{N}(u)} &= \frac{1}{|\mathcal{S}_\ell(u)|}\!
\sum_{v\in \mathcal{S}_\ell(u)} h^{(l)}_v,\\
h^{(l+1)}_u &= \sigma\!\bigg(
W^{(l)} \cdot \mathrm{CONCAT}\!\big(h^{(l)}_u,\ \bar{h}^{(l)}_{\mathcal{N}(u)}\big)
\bigg),
\end{align}
where $W^{(l)}$ is a trainable matrix and $\sigma(\cdot)$ a nonlinearity. Other
aggregators (sum/max/LSTM/attention) can be substituted for the mean operators.

\noindent \textbf{LightGCN \cite{he2020lightgcn}.}  
LightGCN simplifies the design of graph convolution by discarding feature
transformation and nonlinear activations, and focuses only on neighborhood
aggregation. $h^{(0)}$ is initialized by a trainable embedding lookup
table. At layer $l$, the embeddings of a user $u$ and an item $i$ are
updated by averaging over their neighbors with degree normalization:
\begin{align}
    h^{(l+1)}_u &= \sum_{i \in \mathcal{N}(u)} 
    \frac{1}{\sqrt{|\mathcal{N}(u)|}\sqrt{|\mathcal{N}(i)|}} \, h^{(l)}_i, \\
    h^{(l+1)}_i &= \sum_{u \in \mathcal{N}(i)} 
    \frac{1}{\sqrt{|\mathcal{N}(i)|}\sqrt{|\mathcal{N}(u)|}} \, h^{(l)}_u.
\end{align}
In matrix form, the propagation can be compactly written as
\begin{align}
    H^{(l+1)} &= \hat{A}\, H^{(l)}, \quad l=0,\dots,L-1, \\
    Y &= \frac{1}{L+1}\sum_{l=0}^L H^{(l)},
\end{align}
where $Y$ is the final representation obtained by
averaging embeddings across all layers.

\subsection{End-to-end Training}
\label{sec:e2e}

In most industrial applications of GNNs for recommender systems, the GNN component is typically trained \textit{offline}, and the resulting node embeddings are then used asaugmented features for the downstream ranking model~\cite{ying2018graph, wang2018billion, mao2021ultragcn}. However, in such decoupled architectures, the gradients from the ranking model cannot be back-propagated into the GNN, resulting in completely decoupled optimization between the GNN encoder and the recommendation model. 
This gradient isolation leads to several drawbacks: (i) the GNN is optimized solely for structural reconstruction rather than task-specific ranking discrimination; (ii) the feature space learned by the GNN may not align well with the recommendation model’s objectives; and (iii) updating the GNN embeddings requires costly offline retraining whenever the user–item distribution changes. 

Here, we present a theorem with a formal proof regarding gradients to theoretically demonstrate why end-to-end learning is critical for aligning the GNN and recommendation objectives. The proof can be found in Appendix~\ref{app:proof1}.

\begin{theorem}[Gradient Coupling in E2E-GRec]
\label{thm:e2e-coupling}
Let $h_\theta(\cdot;\mathcal{G})$ denote the GNN embeddings with parameters $\theta$, $z_i=[h_\theta(x_i;\mathcal{G}) \Vert b_i]$, $s_\psi$ be the recommendation scorer and 
\begin{equation}
J(\theta,\psi,\alpha)=\tfrac{1}{n}\sum_{i=1}^n\Big[\alpha_1 L_{\mathrm{rec}}(y_i,s_\psi(z_i))+\alpha_2 L_{\mathrm{gnn}}(\theta)\Big]
\end{equation}
Assume $\frac{\partial s_\psi(z)}{\partial h_\theta} \neq 0$. Then:

(i) $\nabla_\theta J$ contains a nonzero contribution from $L_{\mathrm{rec}}$ (Rec → GNN);

(ii) $\frac{\partial}{\partial \theta}\big(\nabla_\psi J\big) \neq 0$, i.e., the recommendation model gradient depends on $\theta$ (GNN → Rec).

\noindent In contrast, cascaded pipelines satisfy $\nabla_\theta J_{\mathrm{rec}}=0$ and $\frac{\partial}{\partial \theta}(\nabla_\psi J)=0$, preventing capture of higher-order graph-rec interactions.
\end{theorem}

\section{Methodology}

Motivated by this potential drawback, we introduce our \textbf{\method}: \textbf{E}nd-to-\textbf{E}nd Joint Training Framework for \textbf{G}raph Neural Networks and \textbf{Rec}ommender Systems in this section. We begin by describing our subgraph construction process from the cross-domain graph in Section \ref{sec:graph}. Next, we discuss the potential challenges of integrating GNNs with recommendation systems. We then propose our multi-task learning GNNs designed to address these issues in Section \ref{sec:ssl} Finally, we present the core component of our framework—the effective integration of GNNs and recommendation systems through joint optimization and feature fusion in Section \ref{sec:combine}.

\subsection{Subgraph Construction from Cross Domain Graph}
\label{sec:graph}

To comprehensively capture complex user interests and content associations across diverse business scenarios, we first construct a large-scale weighted cross-domain heterogeneous graph (\textbf{CDG}) \cite{zhu2025ttgl} as the foundation for our subgraph sampling process. Formally, the CDG integrates multi-type nodes and edges across multiple business domains, including video, search, live streaming, and e-commerce. The graph incorporates two categories of relational connections. The first is explicit connections, which are directly derived from observable user behaviors—such as likes, shares, and follows in the video (item) scenario. The second is implicit connections, mined through algorithms that uncover semantic relationships between nodes to capture higher-order behavioral co-occurrences. A key advantage of this heterogeneous graph design lies in its flexibility. Unlike traditional graph learning approaches that rely on manually defined meta-paths requiring domain expertise—and are therefore difficult to adapt to rapidly evolving business environments—our CDG can flexibly incorporate scenario-specific relation types or meta-paths that align with different business objectives and evolving recommendation contexts, without being constrained by fixed or predefined connection rules.

In this work, we focus specifically on the implicit item-to-item (i2i) relation construction as the core foundation. This is because, first, i2i graphs have consistently demonstrated effectiveness, low noise, and high quality in our prior experiments on recommendation systems (e.g., retrieval stage). In addition, given the large-scale industrial setting, we construct homogeneous item-to-item relations, which is more beneficial for maintaining overall model, data, and system simplicity while ensuring training and serving efficiency. Nevertheless, as emphasized, different business scenarios and stages can flexibly introduce additional meta-paths or relation types to build graphs according to their specific needs.

Specifically, we built this i2i relations based on the Swing \cite{yang2020large} algorithm, which models substitute relationships by exploring user–item –user–item “swing” structures in the interaction graph. For items $i$ and $j$, Swing traverses all user pairs $(u,v)$ who have clicked both items, with each user pair contributing a score of $\frac{1}{\alpha + |I_u \cap I_v|}$, where $|I_u \cap I_v|$ represents the number of common items clicked by both users. This design cleverly penalizes user pairs with overly overlapping click patterns, as they might just be highly active users browsing randomly. Additionally, the algorithm introduces a user weighting factor $w_u = \frac{1}{\sqrt{|I_u|}}$ to reduce the influence of highly active users. Compared with traditional similarity measures, Swing leverages richer structural information and effectively filters out noisy data and captures more reliable product similarity relationships, constructing high-quality product substitute relationship graphs.

Due to the large-scale nature of the CDG, leveraging the complete graph is computationally prohibitive for online serving. Hence, we sample i2i subgraphs from the CDG, which is refreshed daily from real-time data pipeline to capture the evolving item relationships. To address this, we adopt an importance sampling strategy that builds multi-hop subgraphs for both training and inference. Specifically, given a source item $u$ which is from input sequence and sampling depth $L$, the probability of sampling a neighbor $v \in \mathcal{N}(u)$ at layer $\ell$ is defined as
\begin{equation}
p_{u \to v}^{(\ell)} =
\frac{\big(\omega_{uv}\big)^\beta}
{\sum_{k \in \mathcal{N}(u)} \big(\omega_{uk}\big)^\beta},
\end{equation}
where $\omega_{uv}$ denotes the edge weight (e.g., Swing score) between items $u$ and $v$, $\mathcal{N}(u)$ is the neighbor set of $u$ in the CDG, and $\beta \geq 0$ is a temperature parameter that controls the sampling concentration. The number of sampled neighbors is further constrained by a hyperparameter $k$, ensuring tractable time complexity. Note that the subgraph is constructed in real time, which strikes a balance between graph completeness and computational efficiency, enabling scalable and responsive recommendations.

\subsection{GNN Multi-Task Co-Optimization}
\label{sec:ssl}

Given a subgraph, the most straightforward way to achieve end-to-end training is to place a GNN before the recommendation model, forming a supervised cascaded architecture \cite{li2019graph}. Specifically, the subgraph is first fed into the GNN, and the resulting node embeddings are concatenated with other features as inputs to the downstream recommendation model. While this approach is simple and easy to implement, it suffers from several significant drawbacks: 

\begin{itemize}
[leftmargin=*,noitemsep,topsep=0pt]
\itemsep0em

\item \textbf{Long backpropagation path yields superficial gradient signal:} In this setup, the GNN receives its learning signal solely from the final recommendation objective. This supervision must propagate backward through the entire recommendation module, leading to weak guidance for the GNN module. As the gradient passes through multiple nonlinear transformations, it tends to vanish or become unstable, resulting in a shallow and noisy signal by the time it reaches the GNN parameters. Formally, the gradient reaching GNN parameters is computed as:
\begin{equation}
\frac{\partial \mathcal{L}_{\mathrm{rec}}}{\partial \theta_{\mathrm{GNN}}}
=
\underbrace{\frac{\partial \mathcal{L}_{\mathrm{rec}}}{\partial o}}_{\text{loss-to-output}}
\; \cdot
\underbrace{\frac{\partial o}{\partial z}}_{J_g(z) = \prod_{l=1}^{L} \frac{\partial f_l}{\partial f_{l-1}}}
\; \cdot
\underbrace{\frac{\partial z}{\partial \theta_{\mathrm{GNN}}}}_{\text{GNN-internal Jacobian}}
\end{equation}
where $o = g(z) = f_L \circ f_{L-1} \circ \cdots \circ f_1(z)$ denotes the recommendation model composed of $L$ nonlinear layers.  This product of Jacobians can either vanish (if singular values < 1) or explode (if singular values > 1), making training unstable.

\item \textbf{Potential Task Misalignment between GNN and Recommendation:}
The second critical issue is the inherent task misalignment between GNN and recommendation systems. The primary objective of a recommendation system is to present users with an optimal set or ordering of items that maximizes user satisfaction or engagement. This task is inherently discriminative and relative—it focuses on comparing candidate items to identify and rank those most relevant to a user’s preferences.
In contrast, GNN aims to learn meaningful graph topology representations and node embeddings that capture structural properties such as local neighborhood patterns and graph properties, which is different from recommendation systems. 

\end{itemize}

According to the above analysis, we need to design an effective auxiliary graph learning task to train the GNN, rather than simply cascading it with the recommendation module. However, since explicit labels
for training the GNN on a specific graph learning task are unavailable, we propose a self-supervised learning (SSL) task specifically formulated as a generative objective that reconstructs node features. This auxiliary SSL objective provides a stable and informative learning signal for the GNN, allowing it to capture the intrinsic structural semantics of the graph. We first introduce the technical details of our design, followed by a theoretical analysis that demonstrates the advantages over cascaded supervised learning frameworks.

\noindent \textbf{Graph Feature Auto-Encoder (GFAE)} 
Graph self-supervised learning (SSL) methods can be broadly categorized into contrastive and generative approaches. Contrastive methods aim to bring representations of similar nodes (or augmented graph views) closer together while pushing apart those of dissimilar ones. In contrast, generative approaches train models to reconstruct original node features or predict the existence or weights of edges.

Since our goal is to enable end-to-end learning while maintaining efficiency for online serving, we adopt the generative paradigm via a graph auto-encoder framework. This choice is motivated by the fact that contrastive learning typically requires complex training strategies—such as the careful construction of negative samples and reliance on high-quality data augmentations \cite{xue2025h} — which can hinder scalability and deployment efficiency.

Specifically, we employ a variant of the \textbf{Graph Auto-Encoder (GAE)}. 
Instead of reconstructing the adjacency matrix, we adopt a mean squared error (MSE) loss to reconstruct the input node features.

This design is motivated by the need to obtain more informative node representations that can be effectively concatenated with recommendation features for joint optimization. (See Section~\ref{sec:combine}).
Formally, given an input graph $\mathcal{G} = (\mathcal{V}, \mathcal{E})$ with node features $X \in \mathbb{R}^{n \times d}$, our Graph feature Auto-Encoder consists of an encoder $f_\theta(\cdot)$ and a decoder $g_\phi(\cdot)$ with the reconstruction loss as:
\begin{align}
\mathcal{L}_{\mathrm{ssl}} = \| X - \hat{X} \|_F^2 = \| X - g_\phi(f_\theta(X, A)) \|_F^2.
\end{align}

In traditional settings, $f_\theta$ is often implemented as a GNN (e.g., LightGCN or GraphSAGE), and the decoder $g_\phi$ reconstructs the node features $\hat{X}$ through an MLP. To further reduce computational complexity, we adopt a lightweight decoder design and set $g_\phi(\cdot)$ to be the identity mapping:
\begin{align}
f_\theta(X, A) = GNN(X, A), \quad g_\phi(Z) = Z,
\end{align}

Despite its simplicity, this configuration empirically achieves strong performance in our experiments, indicating that the encoder (GNN) itself learns sufficiently expressive node embeddings to recover the original features without requiring an additional decoder.

Next, we provide a theoretical result to demonstrate why introducing a separate SSL task for the GNN is more effective than simply using the GNN output as an augmented feature and training the downstream recommendation model in a cascaded manner:

\begin{theorem}[SSL vs.\ Cascaded: Objective Misalignment]\label{thm:ssl_cascade_misalignment}
Let $X\in\mathbb{R}^{n\times d}$ be node features and $Z=f_\theta(X)\in\mathbb{R}^{n\times k}$ $(k\!\ge\! d)$
the encoder output. Assume a linear decoder $W_d\in\mathbb{R}^{k\times d}$ with full column rank
$\mathrm{rank}(W_d)=d$. If the SSL reconstruction objective attains zero loss,
\begin{align}
\|X - ZW_d\|_F^2 = 0,
\end{align}
then $\mathrm{col}(X)\subseteq \mathrm{col}(Z)$ (i.e., $Z$ preserves the full feature subspace of $X$).
For simplicity, consider the cascaded recommendation head with BPR scores $s_i = w^\top z_i$ for some $w\in\mathbb{R}^k$ and
\begin{align}
\mathcal{L}_{\mathrm{BPR}}(Z)=\sum_{(i,j)\in\mathcal{P}} \ell\!\big(w^\top(z_i - z_j)\big),
\end{align}
where $\ell'(\cdot)$ exists. If $\mathrm{col}(X)\nsubseteq \mathrm{span}(w)$, then the two objectives are misaligned:
there exists a nontrivial subspace
\begin{align}
\mathcal{U}\;=\;\mathrm{col}(X)\cap \ker(w^\top)\;\neq\;\{0\}
\end{align}
such that SSL requires $Z$ to preserve information along $\mathcal{U}$, while BPR is invariant to any perturbations $\{\delta z_i\}$ with
$\delta z_i\in \ker(w^\top)=\{v\in\mathbb{R}^k: w^\top v=0\}$ for all $i$ (hence supplies no gradient on $\mathcal{U}$).
Therefore BPR constrains only the one-dimensional subspace $\mathrm{span}(w)$ and leaves the $(k-1)$-dimensional orthogonal subspace
$\ker(w^\top)$ unconstrained, establishing the misalignment.
\end{theorem}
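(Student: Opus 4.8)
I would split the claim into three linked pieces and handle them in turn. \emph{Subspace preservation.} A zero Frobenius norm means $\|X - ZW_d\|_F^2 = 0$ forces the exact identity $X = ZW_d$. Read column-wise this says every column of $X$ is a linear combination of columns of $Z$, hence $\mathrm{col}(X)\subseteq\mathrm{col}(Z)$; read row-wise it says $x_i = W_d^\top z_i$ for every $i$, so the linear decoder recovers each feature vector exactly. Since $W_d$ has full column rank $d$, the map $W_d^\top:\mathbb{R}^k\to\mathbb{R}^d$ is surjective, $\ker(W_d^\top) = \mathrm{col}(W_d)^\perp$ has dimension $k-d$, and $W_d^\top$ is injective on $\mathrm{col}(W_d)$; this is the one linear-algebra fact I reuse below. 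I would then point out that the subspace intersected with $\ker(w^\top)$ must live in $\mathbb{R}^k$, so ``$\mathrm{col}(X)$'' in the definition of $\mathcal{U}$ should be read as the feature subspace transported into the latent space, i.e.\ $\mathrm{col}(W_d)\subseteq\mathbb{R}^k$ --- the directions of the $z_i$ that survive the decoder, which are exactly the ones SSL pins down.

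\emph{BPR invariance.} Take any perturbation $\{\delta z_i\}$ with each $\delta z_i\in\ker(w^\top)$. Every pairwise score difference $s_i - s_j = w^\top(z_i - z_j)$ is unchanged because $w^\top\delta z_i = w^\top\delta z_j = 0$, so $\mathcal{L}_{\mathrm{BPR}} = \sum_{(i,j)\in\mathcal{P}}\ell\big(w^\top(z_i - z_j)\big)$ is identically unchanged; differentiating (legitimate since $\ell'$ exists) gives a vanishing directional derivative along every such $\{\delta z_i\}$. Hence BPR is flat --- no change in value, no gradient --- on the per-row subspace $\ker(w^\top)$ of dimension $k-1$, and can therefore constrain at most the one-dimensional complement $\mathrm{span}(w)$.

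\emph{Nontrivial overlap, SSL sensitivity, and the conclusion.} I would get $\mathcal{U} = \mathrm{col}(W_d)\cap\ker(w^\top)\neq\{0\}$ from the dimension bound $\dim\mathcal{U}\ \ge\ \dim\mathrm{col}(W_d) + \dim\ker(w^\top) - k\ =\ d + (k-1) - k\ =\ d-1$, which is positive for genuinely multi-dimensional features ($d\ge 2$); the hypothesis $\mathrm{col}(X)\nsubseteq\mathrm{span}(w)$ is what forces $w\neq 0$ (so $\ker(w^\top)$ is a true hyperplane) and excludes the degenerate alignment in the boundary case. That SSL genuinely ``requires $Z$ to preserve information along $\mathcal{U}$'' follows from injectivity of $W_d^\top$ on $\mathrm{col}(W_d)$: for $0\neq u\in\mathcal{U}\subseteq\mathrm{col}(W_d)$ one has $W_d^\top u\neq 0$, so the perturbation $z_i\mapsto z_i + tu$ shifts the reconstruction $W_d^\top z_i$ by $tW_d^\top u\neq 0$ and strictly increases the previously-zero SSL loss. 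Combining the three pieces: $\mathcal{U}$ is a nontrivial subspace on which SSL is strictly penalizing but BPR is exactly invariant with identically zero gradient, which is the asserted misalignment; and $\dim\ker(w^\top) = k-1$ against $\dim\mathrm{span}(w) = 1$ is precisely the statement that the cascaded BPR head leaves a $(k-1)$-dimensional latent subspace unconstrained.

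The step I expect to be the main obstacle is the bookkeeping of ambient spaces: $\mathrm{col}(X)$ and $\mathrm{col}(Z)$ naturally sit in $\mathbb{R}^n$ while $w$, $\ker(w^\top)$ and the $z_i$ sit in $\mathbb{R}^k$, so $\mathrm{col}(X)\cap\ker(w^\top)$ only makes sense after transporting the feature subspace into $\mathbb{R}^k$ via $W_d$; pinning down that identification cleanly, and dealing with the low-dimensional boundary case where the $\nsubseteq$ hypothesis carries the weight, is the delicate part. Everything else reduces to $\ker(W_d^\top) = \mathrm{col}(W_d)^\perp$ plus full column rank, together with the one-line chain-rule observation that a score-preserving perturbation kills the BPR gradient.
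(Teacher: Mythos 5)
Your proof is correct and follows the same overall route as the paper's: zero reconstruction loss forces $X=ZW_d$ and hence the subspace containment; the BPR gradient with respect to each $z_i$ lies in $\mathrm{span}(w)$, so the loss is flat on the $(k-1)$-dimensional hyperplane $\ker(w^\top)$; and the two facts collide on a nontrivial intersection. Where you go beyond the paper is in rigor rather than strategy. First, you correctly diagnose that $\mathrm{col}(X)\cap\ker(w^\top)$ is ill-typed as written ($\mathrm{col}(X)\subseteq\mathbb{R}^n$ versus $\ker(w^\top)\subseteq\mathbb{R}^k$) and repair it by transporting the feature subspace into latent space as $\mathrm{col}(W_d)$; the paper's proof silently performs the intersection without this identification. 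Second, you verify the half of the misalignment the paper only asserts, namely that SSL is \emph{strictly} penalized along $\mathcal{U}$, via injectivity of $W_d^\top$ on $\mathrm{col}(W_d)$. Third, you establish BPR invariance of the loss \emph{value} directly (score differences unchanged) and then conclude zero gradient, whereas the paper computes the gradient first; your order is slightly cleaner and does not even need $\ell'$ to exist for the invariance claim. One small caveat: your dimension count $\dim\mathcal{U}\ge d-1$ settles nontriviality only for $d\ge 2$, and for $d=1$ the hypothesis $\mathrm{col}(W_d)\nsubseteq\mathrm{span}(w)$ does not by itself force $\mathrm{col}(W_d)\cap\ker(w^\top)\ne\{0\}$ (e.g.\ $k=2$, $w=(1,0)^\top$, decoder column $(1,1)^\top$); you flag this boundary case but do not fully close it. The paper's own proof has the identical gap, stated with less care, so this is not a defect relative to the reference argument.
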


Our theorem theoretically demonstrates that the proposed joint training framework yields more informative embeddings. The proof can be found in Appendix~\ref{app:proof2}.

\begin{figure*}[t]
\centering
\includegraphics[width=0.73\textwidth]{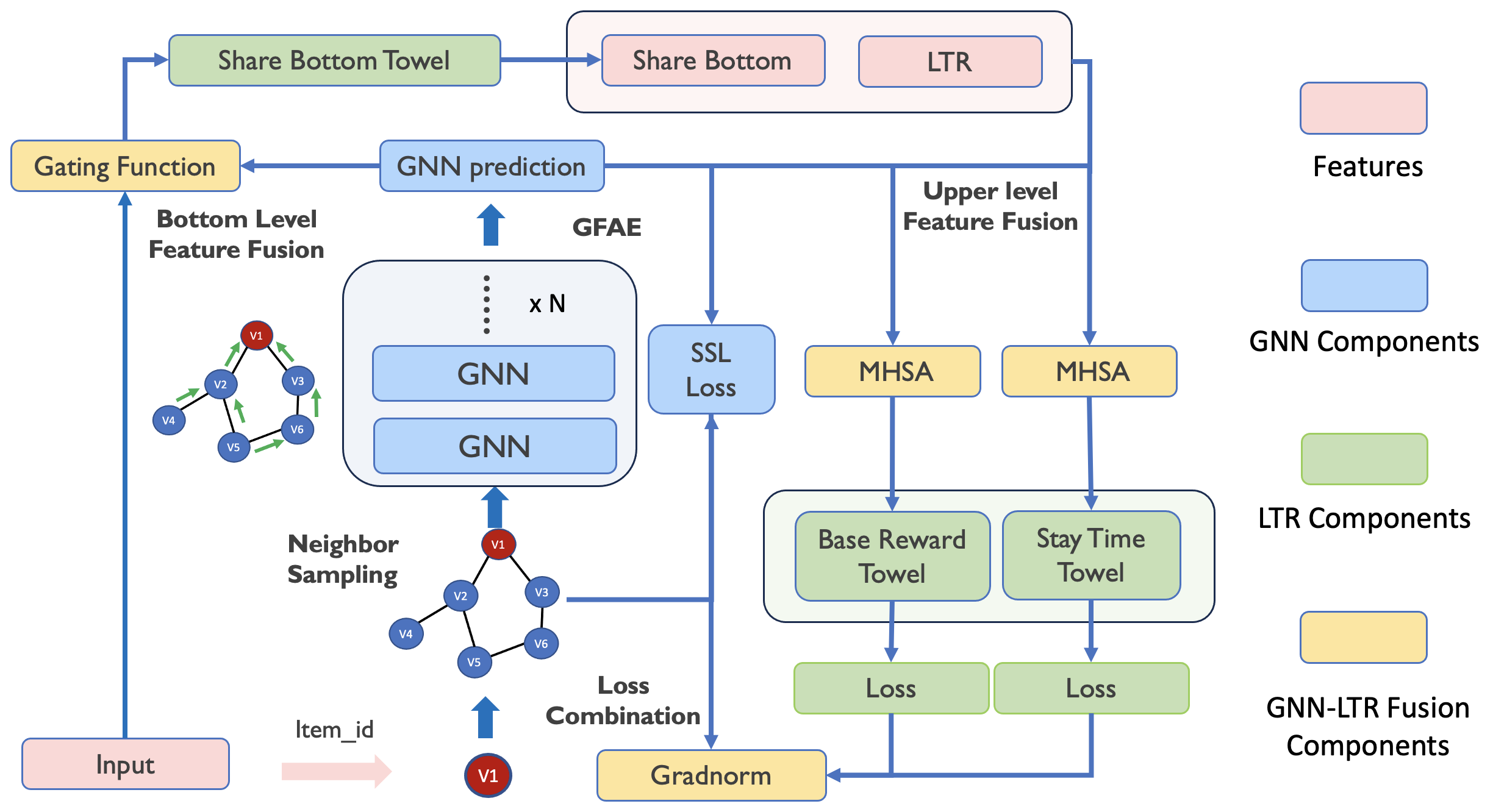}
\caption{Overview of \method. Colors indicate distinct functional blocks. 
(1) \textbf{Subgraph sampling:} Given a source item ID, we sample $k$-hop neighbors to form a subgraph (Sec. \ref{sec:graph}); 
(2) \textbf{GNN SSL:} GNN trained via a graph autoencoder (Sec. \ref{sec:ssl});
(3) \textbf{GNN–LTR fusion:} GNN representations are integrated and jointly optimized with LTR in an end-to-end manner. (Sec. \ref{sec:combine}) }
\label{fig:framework}
\vspace{-3mm}
\end{figure*}

\subsection{End-to-End Joint Training for GNN and Recommendation Systems}
\label{sec:combine}

As introduced above, the key challenge lies in effectively integrating GNN components with recommendation systems. To achieve this, we combine the two modules from two complementary perspectives: \textit{loss combination} and \textit{feature fusion}. Note that our proposed framework can be applied to various stages of recommendation systems. In this work, we specifically focus on the Learning-to-Rank (LTR) component, which serves as one of the fundamental modules in the ranking stage of industrial recommendation pipelines. An overview of the proposed framework is presented in Figure \ref{fig:framework}.

\subsubsection{\textbf{Multi-Task Loss Combination}}

To enable end-to-end optimization between the GNN encoder and the LTR model, we jointly optimize two complementary objectives:
the self-supervised reconstruction loss $\mathcal{L}_{\mathrm{SSL}}$ for GNN representation learning and
the supervised loss $\mathcal{L}_{\mathrm{LTR}}$ for downstream prediction.

\noindent \textbf{Overall Loss.}
The total objective is a weighted combination of the two tasks:
\begin{align}
\mathcal{L}_{\mathrm{total}} = w_{\mathrm{SSL}}(t)\, \mathcal{L}_{\mathrm{SSL}} + w_{\mathrm{LTR}}(t)\, \mathcal{L}_{\mathrm{LTR}},
\end{align}
where $w_{\mathrm{SSL}}(t)$ and $w_{\mathrm{LTR}}(t)$ are dynamic task weights at training step $t$.
Because of the dynamic update process, the scale of the losses changes over time, Naively fixing $w_{\mathrm{SSL}}$ and $w_{\mathrm{LTR}}$ across all time steps may let one loss dominate the other. To avoid this, we apply
Gradnorm~\cite{chen2018gradnorm} for adaptive gradient balancing based on shared parameter space.

\noindent \textbf{Gradient-Based Task Weighting.}
For each task $i \in \{\mathrm{SSL}, \mathrm{LTR}\}$, we define its gradient norm with respect to the shared parameters $\theta_s$ (the input source node feature $h^{(0)}$ in our case):
\begin{align}
G_i(t) = \left\|\nabla_{\theta_s}\big(w_i(t)\, \mathcal{L}_i(t)\big)\right\|_2.
\end{align}
The goal of GradNorm is to ensure all tasks train at a similar rate by adjusting $w_i(t)$ to maintain proportional gradient magnitudes.
The relative training rate of task $i$ is computed as:
\begin{align}
r_i(t) = \frac{\mathcal{L}_i(t) / \mathcal{L}_i(0)}{\frac{1}{T}\sum_{j=1}^T \mathcal{L}_j(t) / \mathcal{L}_j(0)},
\end{align}
where $\mathcal{L}_i(0)$ is the initial loss and $T$ is the total number of tasks.
Then, Gradnorm minimizes the following objective to align the gradient magnitudes:
\begin{align}
\mathcal{L}_{\mathrm{Gradnorm}} =
\sum_{i \in \{\mathrm{SSL},\,\mathrm{LTR}\}}
\left|\, G_i(t) - \bar{G}(t) \cdot r_i(t)^\gamma \, \right|,
\end{align}
where $\bar{G}(t)$ is the mean gradient norm across tasks, and $\gamma$ controls how strongly faster tasks are slowed down. In our experiment, $\gamma$ is typically set to $1$, which has been found empirically effective
for maintaining stable multi-task learning and balanced convergence rates.
This mechanism adaptively increases $w_i(t)$ for slower-learning tasks and decreases it for dominating ones, keeping both $\mathcal{L}_{\mathrm{SSL}}$ and $\mathcal{L}_{\mathrm{LTR}}$ at balanced learning rates.

\noindent \textbf{A Dual-Objective Optimization.}
The complete training is a dual-objective optimization process where model parameters and task weights are updated in parallel based on different objectives. At each training step, we perform two distinct updates: 
\textit{(1) Model Parameter Update:} The parameters of the neural network, $\theta = \{\theta_s, \theta_{\mathrm{GNN}}, \theta_{\mathrm{LTR}}\}$, are updated by minimizing the main task loss, $\min_{\theta} \mathcal{L}_{\mathrm{total}}(t)$. \textit{(2) Task Weight Update:} The dynamic task weights, 
$w_i = \{w_{\mathrm{SSL}}, w_{\mathrm{LTR}}\}$, are updated by minimizing 
the Gradnorm meta-loss $\min_{w_i > 0} \; \mathcal{L}_{\mathrm{GradNorm}}(t).$
Here $w_i$ are normalized after each update to keep $\sum_i w_i = T$, following \cite{chen2018gradnorm}.

This joint optimization framework ensures that the GNN continuously refines its representations under supervision from both
structural (SSL) and LTR signals. By separating the optimization goals, the model learns task-aligned embeddings in a more stable and balanced manner.

\subsubsection{\textbf{Two-level Feature Fusion}}

Besides the loss function, the GNN-derived embeddings play a crucial role, as they capture high-order collaborative filtering signals over the graph structure. To effectively inject this graph information into the recommendation model and enhance its capacity, we fuse the GNN embeddings with other feature sources (e.g., user profile features, LTR). As illustrated in Figure \ref{fig:framework}, we integrate the GNN-derived features at two hierarchical levels—the \textit{bottom} and the \textit{upper} levels.
At the bottom level, we combine the GNN output with other input features and feed the fused representation into a shared bottom tower. The output of this tower serves as the fundamental feature foundation for the two downstream Learning-to-Rank (LTR) branches: the base-reward tower and the stay-time tower. At the upper level, we re-inject the GNN features together with the task-specific LTR feature along with the output from the shared bottom tower. This design provides an extra gradient flow that travels through a shorter path with less attenuation and enhances the model’s high-order perceptive capacity, enabling more expressive interactions between graph-derived and task-specific features across different semantic levels.

However, simple concatenation is insufficient to capture the complex interrelations between these features. To address this limitation, we explore two complementary fusion strategies: 
(1) \textbf{Concatenation with gating}, which provides simple yet effective feature-wise control, 
and (2) \textbf{Attention-based token fusion}, which treats each feature source as a token and performs multi-head attention across them.
Let the GNN output be $F_{\mathrm{gnn}} \in \mathbb{R}^{N \times d}$ and the remaining feature groups 
$\{F_{\mathrm{shared}}, F_{\mathrm{ltr}}\}$ each in $\mathbb{R}^{N \times d}$, where $
F_{\mathrm{shared}}
= \mathrm{Tower_{share}}\allowbreak \!\big(\,[F_{\mathrm{in}},\, F_{\mathrm{gnn}}]\,\big)$ is the output of the share bottom towel. 
\paragraph{1. Concatenation with Gating.}
We first concatenate all feature groups together
$F_{\mathrm{cat}}   = \textsc{Concat}\big(\,F_{\mathrm{shared}},\, F_{\mathrm{ltr}},\, F_{\mathrm{gnn}}\,\big)$. To adaptively control the contribution of different feature sources, we introduce a gating function that learns task-relevant feature weights:
\begin{align}
\mathbf{g} = \sigma\left(\mathbf{W}_g F_{\mathrm{cat}} + \mathbf{b}_g\right), \quad
F_{\mathrm{fused}}^{(\mathrm{gate})} = \mathbf{g} \odot F_{\mathrm{cat}}
\end{align}

\noindent where $\mathbf{W}_g \in \mathbb{R}^{d \times d}$ and $\mathbf{b}_g \in \mathbb{R}^d$ are learnable parameters, $\sigma(\cdot)$ is the sigmoid activation, and $\odot$ denotes element-wise multiplication. This gating mechanism allows the model to dynamically emphasize or suppress specific features based on their relevance to the task.

\paragraph{2. Attention-based Token Fusion.}
To capture inter-feature dependencies and further refine the fused representation, we apply Multi-Head Self-Attention(\textbf{MHSA}) over the stacked feature sequence $F_{\mathrm{stack}} =\textsc{Stack}\big(\,F_{\mathrm{shared}},\, F_{\mathrm{ltr}},\, F_{\mathrm{gnn}}\,\big)$.
Specifically, we treat each feature as an individual token:
\begin{equation}
F_{\mathrm{fused}}^{(\mathrm{attn})} = \mathrm{Mean}_{[:,t,:]}\big( \textbf{MHSA}(F_{\mathrm{stack}}) + F_{\mathrm{stack}}\big), \quad F_{\mathrm{stack}} \in \mathbb{R}^{N \times T \times d},
\end{equation}

\noindent where the residual connection ensures stable gradient flow. The refined features $F_{\mathrm{fused}}^{(\mathrm{attn})}$ (or $F_{\mathrm{fused}}^{(\mathrm{gate})}$) are then fed into downstream base-reward/stay-time towers:
\begin{equation}
F_{\mathrm{reward}} = \mathrm{Tower}_{\mathrm{reward}}\!\big(F_{\mathrm{fused}}\big), 
\quad
F_{\mathrm{stay}}   = \mathrm{Tower}_{\mathrm{stay}}\!\big(F_{\mathrm{fused}}\big).
\end{equation}
\begin{equation}
F_{\mathrm{final}} = \beta_{\mathrm{r}} \cdot F_{\mathrm{reward}} + \beta_{\mathrm{s}} \cdot F_{\mathrm{stay}},
\end{equation}
Here, $F_{\mathrm{reward}}$ and $F_{\mathrm{stay}}$ denote the outputs of the base-reward and stay-time towers, respectively,
and their weighted sum forms the final LTR prediction $F_{\mathrm{final}}$, where $\beta_{\mathrm{r}}$ and $\beta_{\mathrm{s}}$ is a dynamic weighting coefficient that balances the contribution between the base-reward prediction and the stay-time signal. The GNN parameters are optimized through a multi-task loss (also see Section \ref{sec:ssl}):
\begin{align}
\nabla_{\theta}\mathcal{L}_{\mathrm{total}}
  &= w_{\mathrm{SSL}}\nabla_{\theta}\mathcal{L}_{\mathrm{SSL}}
   + w_{\mathrm{r}}J_{\theta\!\to Z}^{\top}\,
      \frac{\partial \mathcal{L}_{\mathrm{reward}}}{\partial F_{\mathrm{gnn}}}
   + w_{\mathrm{s}}J_{\theta\!\to Z}^{\top}\,
      \frac{\partial \mathcal{L}_{\mathrm{stay}}}{\partial F_{\mathrm{gnn}}},
\end{align}
In our experiments, the attention-based token fusion achieved the best overall performance, meanwhile, the gating-based fusion serves as an efficient alternative with latency constraints. Considering that the upper-level fusion directly precedes the task-specific towers and thus receives more immediate gradient signals, we adopt attention-based fusion at the upper level to maximize feature interaction and gating-based fusion at the bottom level provides efficient feature modulation while reducing computational overhead. Nevertheless, both fusion modules are fully interchangeable, allowing flexible adaptation to different deployment scenarios or efficiency requirements (such as online serving).

\section{Experiments}
\label{sec:exp}

In this section, we present comprehensive experiments, including offline AUC evaluations across days in Section \ref{sec:offline}, ablation studies on key components in Section \ref{sec:ablation}, and online A/B testing results conducted within practical recommendation systems in Section \ref{sec:online}, to demonstrate the effectiveness of our proposed \method.

\subsection{Offline Results}
\label{sec:offline}

\noindent \textbf{Settings} 
As mentioned earlier, although our framework can be integrated into various stages of a recommendation system,
we focus on the LTR stage in this work. 
The experimental configurations are summarized as follows:

\begin{itemize}[leftmargin=1.5em]
    \item \textbf{Subgraph Sampling.}
    We consider two sampling strategies: 
    (a) sampling one-hop neighbors, where 100 neighbors are sampled for each source node to construct the subgraph; and 
    (b) sampling two-hop neighbors, where 25 and 15 neighbors are sampled for the first and second hops, respectively. 
    During experiments, we found that the one-hop subgraph already provides significant offline improvements while maintaining high efficiency. 
    Therefore, we adopt the one-hop configuration as the default setting in all experiments.

    \item \textbf{GNN Architecture.}
    For the GNN component, we primarily adopt \textsc{LightGCN}~\cite{he2020lightgcn} (as introduced in Section~\ref{sec:model}), which has demonstrated strong effectiveness in large-scale recommendation systems. We also conduct an ablation study on different GNN backbones, as presented in Section~\ref{sec:ablation}.
    The number of GNN layers $k$ is set to match the number of sampled hops 
    (i.e., $k=1$ for the one-hop subgraph described above). 
\end{itemize}
The hyperparameter searching space is provided in Appendix \ref{app:hyper}. 

\noindent \textbf{Evaluation} 
We first convert the continuous \emph{staytime} metric into a binary label $y^{(0)}_i$ based on a predefined threshold~$\tau$:
\begin{equation}
y^{(0)}_i =
\begin{cases}
1, & \text{if } \text{staytime}_i > \tau,\\[4pt]
0, & \text{otherwise.}
\end{cases}
\end{equation}

To enhance the supervision signal, we perform a label refinement step that aggregates multiple user interaction indicators (each typically binary, $0$ or $1$) into a unified interaction score:
\begin{equation}
\mathrm{interact}_i = \sum_{k} \text{pos\_action}_{i,k} - \text{neg\_action}_i.
\end{equation}
If a user exhibits any positive interaction (e.g., like, comment, share), the interaction score increases, while negative feedback decreases it.  
The final label is then obtained by combining the staytime-based label with this interaction score and clipping the result into $[0,1]$:
\begin{equation}
y_i = \mathrm{clip}\big(y^{(0)}_i + \mathrm{interact}_i,\; 0,\; 1\big).
\end{equation}

\noindent
We apply the binary cross-entropy (BCE) loss for the two major towers (base-reward and stay-time) 
and AUC as the primary evaluation metric. 
Instead of absolute AUC values, we report the \textit{relative improvements} 
over the baseline LTR model, where the improvement is computed as the percentage gain 
with respect to the baseline performance. 
Since our offline training is conducted in a \textbf{streaming} manner, to better demonstrate the stability and generalization of our model, 
we report the AUC improvement across six consecutive days after the model has converged, ensuring the fair comparison. We evaluate two fusion strategies for the upper-level feature integration: 
(1) simple gating fusion (\texttt{gate}) and 
(2) attention-based fusion (\texttt{attn}) in the result Table \ref{tab:auc} (see Section \ref{sec:combine}). 

\vspace{-3mm}
\begin{table}[h]
\centering
\caption{Relative AUC improvements (\%) across different models and dates.}
\vspace{-3mm}
\resizebox{0.5\textwidth}{!}{%
\begin{tabular}{lcccccccc}
\toprule
\textbf{Model} & \textbf{Average} & \textbf{Day1} & \textbf{Day2} & \textbf{Day3} & \textbf{Day4} & \textbf{Day5} & \textbf{Day6} \\
\midrule
\method(\texttt{gate})  & +1.40\% & +1.37\% & +1.33\% & +1.40\% & +1.43\% & +1.45\% & +1.44\% \\
\method(\texttt{attn}) & +1.65\% & +1.71\% & +1.56\% & +1.59\% & +1.59\% & +1.60\% & +1.58\% \\
\bottomrule
\end{tabular}
}
\label{tab:auc}
\vspace{-3mm}
\end{table}

From the results, we can observe that our proposed \method{} consistently outperforms the
baseline LTR model. This improvement can be attributed to the following factors:

\begin{enumerate}[leftmargin=1.5em]
    \item \textbf{Higher-order collaborative signals.}
    By introducing graph-based message passing, our model captures higher-order
    collaborative relationships among items, allowing the ranking tower to
    exploit neighborhood information beyond individual interactions. 

    \item \textbf{End-to-end optimization.}
    The end-to-end training framework creates a powerful synergy between the GNN and the downstream recommendation task. This structure allows the gradients from the LTR loss to flow all the way back through the network and directly influence the GNN's parameter updates. This ensures that the GNN also learns task-specific graph representations optimized for LTR task, leading to more discriminative and task-relevant embeddings.

    \item \textbf{Self-supervised learning.}
    The SSL task guides the GNN to learn and generate more informative and higher-quality node representations by focusing on the intrinsic structural properties of the graph.  This approach mitigates potential task misalignment (see Theorem \ref{thm:ssl_cascade_misalignment}) and ultimately provides a more powerful set of embeddings for the downstream recommendation system. 
\end{enumerate}

\subsection{Ablation Study}
\label{sec:ablation}

\subsubsection{GNN backbones}
We first provide an ablation study on different GNN backbones in Figure \ref{fig:backbone}. From the results, we can see that LightGCN significantly outperforms GraphSAGE.
This is because LightGCN removes unnecessary nonlinear transformations and feature mixing, allowing it to focus purely on high-order collaborative signals through linear neighborhood aggregation.
Such a design not only reduces potential over-smoothing issue but also preserves the original embedding space structure, leading to more stable and better representations in recommendation tasks.
\begin{figure}[h]
\centering
\begin{minipage}[t]{0.22\textwidth}
\centering
\includegraphics[width=\linewidth]{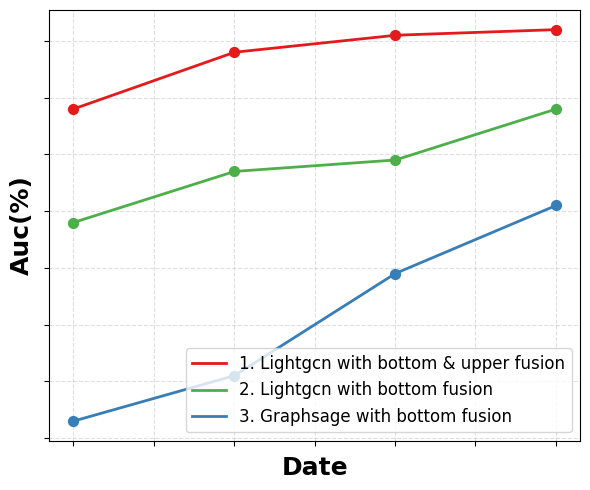}
\caption{Ablation on GNN backbones and fusion.}
\label{fig:backbone}
\end{minipage}
\hfill
\begin{minipage}[t]{0.22\textwidth}
\centering
\includegraphics[width=\linewidth]{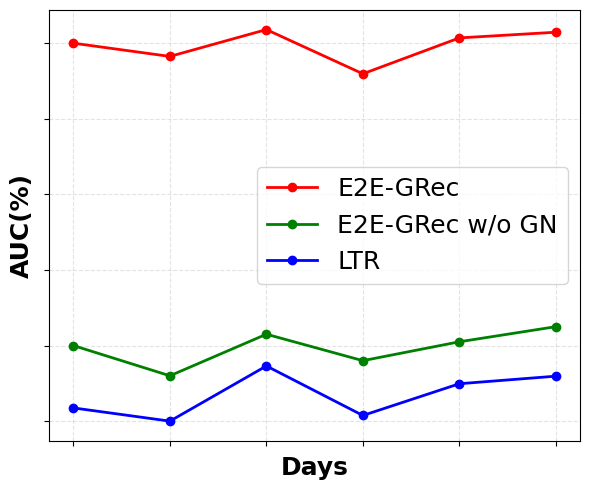}
\caption{Ablation on the effect of Gradnorm.}
\label{fig:gn}
\end{minipage}
\vspace{-3mm}
\end{figure}

\subsubsection{Gradnorm \& SSL}
In this section, we provide an ablation study on the influence of Gradnorm. We simply replace Gradnorm with a fixed coefficient in front of our SSL loss when performing the loss combination, while keeping all other settings the same. The results are shown in Figure \ref{fig:gn}.
From the results, we can observe that:
(1) Without Gradnorm, the GNN module can still enhance the basic recommendation system by incorporating neighbor information.
(2) With Gradnorm, the performance is significantly improved. This is because it adjusts the gradient magnitudes of the SSL and LTR losses according to their learning speeds, ensuring that neither task dominates the optimization. We also provide Figure \ref{fig:weight} to further illustrate this phenomenon, showing how Gradnorm dynamically balances the loss weights during training. As training progresses, Gradnorm adaptively adjusts the relative importance of each objective, ensuring stable optimization and preventing any single task from dominating the learning process. 
\begin{figure}[h]
\centering
\hspace{-5mm}
\includegraphics[width=0.50\textwidth]{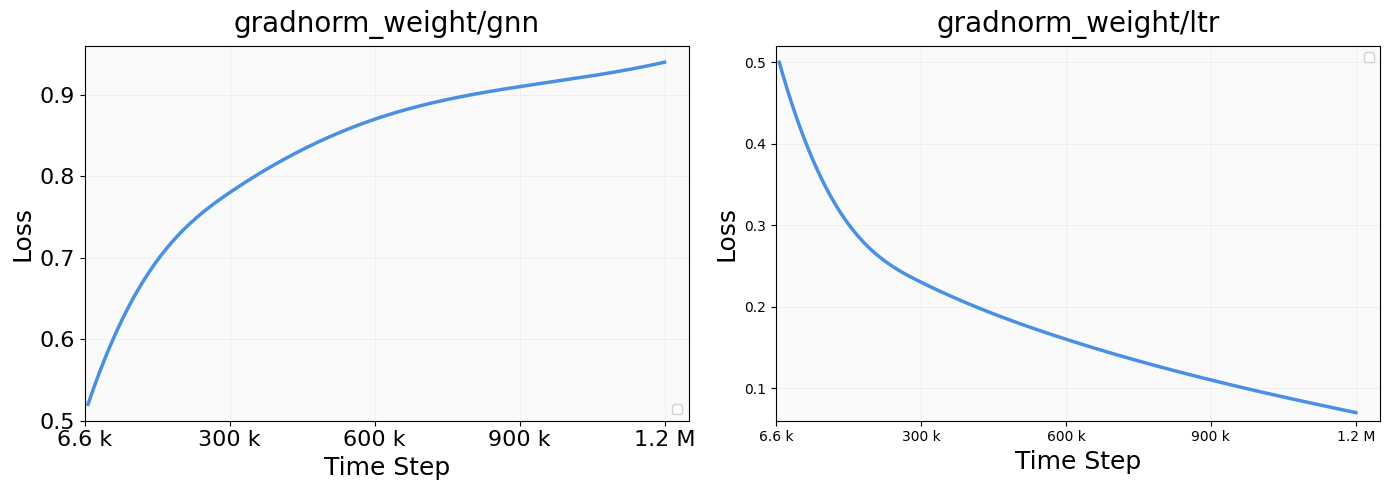}
\caption{Dynamic Weight assigned by Gradnorm}
\label{fig:weight}
\vspace{-5mm}
\end{figure}

\subsubsection{Fusion Strategy} 
As mentioned in Section \ref{sec:combine}, gating and attn serve as two different fusion strategies that trade off between performance and efficiency.
To validate this, we provide an ablation study in Table \ref{tab:auc}.
From the results, we can observe that attention-based fusion consistently outperforms gating-based fusion, which confirms our intuition that the attention mechanism can more effectively capture inter-relations between different feature representations from the LTR and GNN modules.

Furthermore, from Figure \ref{fig:backbone}, we can observe that applying both upper and bottom fusion yields a much larger improvement compared to using bottom fusion alone. Compared with bottom-only fusion, the additional upper fusion path introduces an extra gradient contribution that is shorter and less attenuated, thereby generally providing more effective supervision and leading more stable convergence.

\subsection{Online Results}
\label{sec:online}
To validate our method's real-world effectiveness, we deployed it in the online A/B test on the production large-scale recommendation system. We compared our model against the platform's highly optimized production baseline, which incorporates both rule-based strategies and state-of-the-art models.
The evaluation focused on key user engagement and retention metrics: \textit{StayDuration (SD)}: The average time a user spends on the platform per session, a primary indicator of overall engagement. \textit{Last-7-Active Days (LT-7)}: The number of days a user was active in the last seven days, measuring short-term user retention and habit formation. Skip-related Metrics: \textit{Skip/Play}: The ratio of skipped videos to total videos played. A lower value signifies higher content relevance. \textit{Skip/User}: The average number of videos a user skips, indicating overall user satisfaction.
\textit{PlayTimeRate/Play (PTR)}: The ratio of a video's actual playtime to its total duration, measuring how engaging a specific piece of content is.

Our model demonstrated statistically significant improvements across all key metrics, as shown in Table \ref{tab:ab_lift_summary}. Notably, we observed a \textbf{+0.133\%} relative increase in \textit{StayDuration (SD)} and \textbf{+0.0262\%} relative increase in \textit{Last-7-Active Days (LT-7)}, showing users spent more time on the platform. At the same time, a \textbf{-0.1735\%} reduction in \textit{Skip/Play} and a \textbf{-0.3171\%} reduction in \textit{Skip/User} indicate that our model recommended more relevant content that users were less likely to skip.
These results confirm that our method \method delivers a positive impact on both immediate user engagement and longer-term retention. We attribute this success to the model's ability to leverage higher-order collaborative signals from the graph structure within an end-to-end training framework, leading to more effective and holistic optimization of the recommendation pipeline.

\vspace{-3mm}
\begin{table}[h]
\centering
\caption{Relative Improvement (\%) of Key Metrics in Online A/B Testing}
\vspace{-3mm}
\label{tab:ab_lift_summary}
\resizebox{0.45\textwidth}{!}{%
\begin{tabular}{cccccc}
\toprule
\textbf{Metric} & \textbf{SD} & \textbf{LT-7} & \textbf{Skip/Play} & \textbf{Skip/User} & \textbf{PTR} \\
\midrule
\textbf{Lift (\%)} & +0.133 & +0.0262 & -0.1735 & -0.3171 & +0.1247 \\
\bottomrule
\end{tabular}
}
\vspace{-5mm}
\end{table}

\section{Conclusion}
\label{sec:conclu}

In this paper, we presented E2E-GRec, a novel end-to-end joint training framework that unifies GNNs with industrial recommender systems. Built upon subgraphs sampled from a large-scale, cross-domain graph, our framework first introduces a Graph Feature Auto-Encoder as an auxiliary self-supervised task to enhance the quality of the learned GNN embeddings. The subsequent two-level feature fusion enables the effective injection of graph information into the recommendation model. Furthermore, the Gradnorm-based dynamic loss balancing ensures stable convergence and prevents task dominance during end-to-end training. Extensive offline evaluations, online A/B tests, and ablation studies on practical production data, together with theoretical analysis, demonstrate the superior effectiveness of our proposed approach.

\bibliographystyle{ACM-Reference-Format}
\bibliography{ref_e2e}

\appendix

\newpage
\appendix
\renewcommand{\thetheorem}{\arabic{theorem}} 
\setcounter{theorem}{0} 

\section{Proof of Theorem 1}
\label{app:proof1}

\begin{theorem}[Gradient Coupling in E2E-GRec]
\label{thm:e2e-coupling}
Let $h_\theta(\cdot;\mathcal{G})$ denote the GNN embeddings with parameters $\theta$, $z_i=[h_\theta(x_i;\mathcal{G}) \Vert b_i]$, $s_\psi$ be the recommendation scorer and 
\begin{equation}
J(\theta,\psi,\alpha)=\tfrac{1}{n}\sum_{i=1}^n\Big[\alpha_1 L_{\mathrm{rec}}(y_i,s_\psi(z_i))+\alpha_2 L_{\mathrm{gnn}}(\theta)\Big]
\end{equation}
Assume $\frac{\partial s_\psi(z)}{\partial h_\theta} \neq 0$. Then:

(i) $\nabla_\theta J$ contains a nonzero contribution from $L_{\mathrm{rec}}$ (Rec → GNN);

(ii) $\frac{\partial}{\partial \theta}\big(\nabla_\psi J\big) \neq 0$, i.e., the recommendation model gradient depends on $\theta$ (GNN → Rec).

\noindent In contrast, cascaded pipelines satisfy $\nabla_\theta J_{\mathrm{rec}}=0$ and $\frac{\partial}{\partial \theta}(\nabla_\psi J)=0$, preventing capture of higher-order graph-rec interactions.
\end{theorem}

We show that, unlike cascaded pipelines that pretrain a GNN and then freeze its
embeddings as augmented features for an recommendation model, our end-to-end (E2E) co-training causes the gradients of the GNN and the recommendation model to \emph{mutually influence} each other.
This coupling occurs (i) through feature fusion—where the recommendation model consumes the
GNN embeddings via concatenation—and (ii) through loss fusion—where multiple
objectives are combined via GradNorm.

\paragraph{Setting.}
Let $\mathcal{D}=\{(x_i, y_i)\}_{i=1}^n$ denote training instances
(e.g., user--item pairs or query--document pairs) on a graph $\mathcal{G}$.
A GNN with parameters $\theta$ produces node/item/user embeddings
$h_\theta(\cdot;\mathcal{G})$. The recommendation  scorer with parameters $\psi$
consumes a fused feature $z_i \coloneqq \big[h_\theta(x_i;\mathcal{G}) \;\Vert\; b_i\big]$,
where $b_i$ are non-graph (auxiliary) features and $\Vert$ denotes concatenation.
The recommendation  score is $s_\psi(z_i)$.
We consider a ranking loss $L_{\mathrm{rec}}(y_i, s_\psi(z_i))$ (pointwise/pairwise/listwise)
and an auxiliary graph loss $L_{\mathrm{gnn}}(\theta)$ (e.g., supervised or SSL).

\subsection*{1.\;Cascaded Baseline (No End-to-End Fine-tuning)}
\textit{Stage A (GNN only):}
\begin{align}
\min_{\theta}\; J_{\mathrm{GNN}}(\theta)=\frac{1}{n}\sum_{i=1}^n L_{\mathrm{gnn}}(\theta).
\end{align}
\textit{Stage B (Recommendation model on frozen embeddings):}
\begin{align}
\min_{\psi}\; J_{\mathrm{rec}}(\psi)=\frac{1}{n}\sum_{i=1}^n
L_{\mathrm{rec}}\bigl(y_i,\, s_\psi(\,[h_{\theta_{\mathrm{frozen}}}(x_i;\mathcal{G}) \Vert b_i]\,)\bigr).
\end{align}
Gradients decouple:
\begin{align}
\nabla_{\theta} J_{\mathrm{rec}}(\psi)=\mathbf{0} \quad\text{and}\quad
\nabla_{\psi} J_{\mathrm{GNN}}(\theta)=\mathbf{0},
\end{align}
so neither module’s optimization can influence the other during training.

\subsection*{2.\;E2E-GRec: Joint Feature Fusion and Loss Fusion}
We jointly optimize
\begin{align}
\min_{\theta,\psi,\alpha_1,\alpha_2}\; J(\theta,\psi,\alpha)
&=\frac{1}{n}\sum_{i=1}^n \Big[
\alpha_1\,L_{\mathrm{rec}}\bigl(y_i,\, s_\psi(z_i)\bigr)
+\alpha_2\,L_{\mathrm{gnn}}(\theta)\Big],\\
z_i &= \big[h_\theta(x_i;\mathcal{G}) \Vert b_i\big],
\label{eq:e2e-objective}
\end{align}
where $\alpha_1,\alpha_2>0$ are task weights learned by GradNorm to balance gradient
magnitudes on a shared layer.

\paragraph{Gradients wrt GNN parameters $\theta$.}
By the chain rule,
\begin{align}
\nabla_{\theta} J(\theta,\psi,\alpha)
&=\frac{1}{n}\sum_{i=1}^n
\alpha_1\,
\underbrace{\frac{\partial L_{\mathrm{recommendation}}\big(y_i, s_\psi(z_i)\big)}{\partial s_\psi(z_i)}}_{\neq 0\ \text{if recommendation error}}
\;\underbrace{\frac{\partial s_\psi(z_i)}{\partial z_i}}_{\text{sensitivity}} \nonumber \\ 
& \;\underbrace{\frac{\partial z_i}{\partial h_\theta(x_i;\mathcal{G})}}_{[I\;\;0]}
\;\underbrace{\frac{\partial h_\theta(x_i;\mathcal{G})}{\partial \theta}}_{\text{GNN backprop}}
\;+\;\alpha_2\,\nabla_{\theta} L_{\mathrm{gnn}}(\theta).
\label{eq:theta-grad}
\end{align}
Hence the \emph{Rec loss} directly backpropagates into the GNN via the nonzero Jacobian
$\partial s_\psi/\partial z_i$ and the fusion map $z_i=[h_\theta \Vert b_i]$.
If the recommendation has any non-degenerate dependence on the $h_\theta$ coordinates
(e.g., the first recommendation layer has nonzero weights on the $h_\theta$ block),
then the first term in \eqref{eq:theta-grad} is nonzero whenever the recommendation signal is nonzero,
proving \textbf{recommendation $\Rightarrow$ GNN} influence.

\noindent \textit{Gradients wrt recommendation parameters $\psi$.}
\begin{align}
\nabla_{\psi} J(\theta,\psi,\alpha)
=\frac{1}{n}\sum_{i=1}^n
\alpha_1\,
\frac{\partial L_{\mathrm{rec}}\big(y_i, s_\psi(z_i)\big)}{\partial s_\psi(z_i)}
\;\frac{\partial s_\psi(z_i)}{\partial \psi}.
\label{eq:psi-grad}
\end{align}
Although $\nabla_{\psi} J$ does not include a direct $\partial L_{\mathrm{gnn}}/\partial \psi$ term,
it depends on $z_i$, which contains $h_\theta(x_i;\mathcal{G})$.
Thus the \emph{numerical value and direction} of $\nabla_{\psi} J$ are functions of $\theta$. So changing $\theta$ changes $z_i$ and hence \emph{changes the recommendation gradient itself}.
This establishes \textbf{GNN $\Rightarrow$ recommendation} influence.

GradNorm adjusts $(\alpha_1,\alpha_2)$ to balance gradient norms on a chosen shared layer $W$:
letting $G_k\!\coloneqq\!\big\|\nabla_{W}\big(\alpha_k L_k\big)\big\|$ for $k\in\{\mathrm{rec},\mathrm{gnn}\}$,
the $\alpha$’s are updated to drive $G_k$ toward a common target. Because
$G_{\mathrm{rec}}$ depends on $(\theta,\psi)$ via $z$ and $s_\psi$, and
$G_{\mathrm{gnn}}$ depends on $\theta$, the \emph{loss weights themselves} become functions
of both modules’ states. This dynamically re-weights $L_{\mathrm{rec}}$ and $L_{\mathrm{gnn}}$
so that each task’s gradient contribution adaptively influences the other.

\section{Proof of Theorem 2}
\label{app:proof2}

\begin{theorem}[SSL vs.\ Cascaded Ranking Head: Objective Misalignment]\label{thm:ssl_cascade_misalignment}
Let $X\in\mathbb{R}^{n\times d}$ be node features and $Z=f_\theta(X)\in\mathbb{R}^{n\times k}$ $(k\!\ge\! d)$
the encoder output. Assume a linear decoder $W_d\in\mathbb{R}^{k\times d}$ with full column rank
$\mathrm{rank}(W_d)=d$. If the SSL reconstruction objective attains zero loss,
\begin{align}
\|X - ZW_d\|_F^2 = 0,
\end{align}
then $\mathrm{col}(X)\subseteq \mathrm{col}(Z)$ (i.e., $Z$ preserves the full feature subspace of $X$).
Consider the cascaded recommendation head with BPR scores $s_i = w^\top z_i$ for some $w\in\mathbb{R}^k$ and
\begin{align}
\mathcal{L}_{\mathrm{BPR}}(Z)=\sum_{(i,j)\in\mathcal{P}} \ell\!\big(w^\top(z_i - z_j)\big),
\end{align}
where $\ell'(\cdot)$ exists. If $\mathrm{col}(X)\nsubseteq \mathrm{span}(w)$, then the two objectives are misaligned:
there exists a nontrivial subspace
\begin{align}
\mathcal{U}\;=\;\mathrm{col}(X)\cap \ker(w^\top)\;\neq\;\{0\}
\end{align}
such that SSL requires $Z$ to preserve information along $\mathcal{U}$, while BPR is invariant to any perturbations $\{\delta z_i\}$ with
$\delta z_i\in \ker(w^\top)=\{v\in\mathbb{R}^k: w^\top v=0\}$ for all $i$ (hence supplies no gradient on $\mathcal{U}$).
Therefore BPR constrains only the one-dimensional subspace $\mathrm{span}(w)$ and leaves the $(k-1)$-dimensional orthogonal subspace
$\ker(w^\top)$ unconstrained, establishing the misalignment.
\end{theorem}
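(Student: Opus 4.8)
The statement decomposes into two essentially independent claims, and the plan is to handle them in sequence. First I would establish the SSL side: the hypothesis $\|X - ZW_d\|_F^2 = 0$ is equivalent to the matrix identity $X = ZW_d$. Reading this columnwise, every column of $X$ is a linear combination (with coefficients given by the corresponding column of $W_d^\top$, or rather by the rows of $W_d$ acting on the columns of $Z$) of the columns of $Z$, so $\mathrm{col}(X)\subseteq\mathrm{col}(Z)$ is immediate. The full-column-rank assumption on $W_d$ is not strictly needed for this containment, but it is what makes the reconstruction nondegenerate (it rules out $W_d$ collapsing dimensions and guarantees $Z$ must genuinely carry a $d$-dimensional image of $X$); I would note this in one line. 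The conclusion is that $Z$ must vary in every direction that $X$ varies, i.e. perturbations of $Z$ that kill any part of $\mathrm{col}(X)$ strictly increase the SSL loss away from zero.

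Next I would treat the BPR side. Here the scores are $s_i = w^\top z_i$, so for any collection of perturbations $\{\delta z_i\}$ with each $\delta z_i \in \ker(w^\top)$, we have $w^\top(z_i + \delta z_i) = w^\top z_i$ for all $i$, hence $w^\top((z_i+\delta z_i)-(z_j+\delta z_j)) = w^\top(z_i - z_j)$ for every pair $(i,j)\in\mathcal{P}$, so $\mathcal{L}_{\mathrm{BPR}}$ is literally unchanged. Differentiating, $\nabla_{z_i}\mathcal{L}_{\mathrm{BPR}}$ lies in $\mathrm{span}(w)$ for each $i$ (it is a sum of terms $\ell'(\cdot)\,w$ and $-\ell'(\cdot)\,w$), so the BPR gradient has no component in $\ker(w^\top)$; this is the precise sense in which BPR "supplies no gradient" on that subspace. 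Then I would verify the subspace $\mathcal{U} = \mathrm{col}(X)\cap\ker(w^\top)$ is nontrivial: since $\ker(w^\top)$ has dimension $k-1$ and $\mathrm{col}(X)$ has dimension at most $d\le k$, the hypothesis $\mathrm{col}(X)\nsubseteq\mathrm{span}(w)$ means $\mathrm{col}(X)$ is not contained in the orthogonal complement of $\ker(w^\top)$; a dimension count ($\dim\mathcal{U} \ge \dim\mathrm{col}(X) + (k-1) - k = \dim\mathrm{col}(X) - 1 \ge 0$, with the inequality strict exactly when $\mathrm{col}(X)\not\subseteq\mathrm{span}(w)$) gives $\mathcal{U}\neq\{0\}$. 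Combining: along $\mathcal{U}$, SSL is strictly sensitive while BPR is exactly invariant, which is the asserted misalignment; and the final sentence (BPR constrains only $\mathrm{span}(w)$, leaving $\ker(w^\top)$ of dimension $k-1$ unconstrained) is just a restatement of the gradient-direction fact.

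The main obstacle is not any single hard computation — each step is a short linear-algebra argument — but rather pinning down what "misalignment" should formally mean so that the dimension count actually delivers it. I would be careful to state it as: there is a nonzero direction in representation space along which the two losses disagree (one strictly increasing, the other flat), and to make the BPR-invariance claim robust I would phrase it in terms of the gradient living in $\mathrm{span}(w)$ rather than relying on any second-order or loss-landscape argument. One subtlety worth a sentence: the "only one-dimensional subspace $\mathrm{span}(w)$ is constrained" claim is about the \emph{per-node score functional}, and in the aggregate across many pairs the effective constrained directions are still confined to $\mathrm{span}(w)$ in each $z_i$; I would make sure the quantifiers over $i$ and over $(i,j)\in\mathcal{P}$ are handled uniformly so the argument does not secretly assume $\mathcal{P}$ connects all nodes.
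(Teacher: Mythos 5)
Your proposal is correct and follows essentially the same route as the paper's proof: zero reconstruction loss forces $X=ZW_d$ and hence $\mathrm{col}(X)\subseteq\mathrm{col}(Z)$; the BPR gradient with respect to each $z_i$ is a scalar multiple of $w$, so the loss is (exactly, not just to first order) invariant under perturbations in $\ker(w^\top)$; and the nontrivial intersection $\mathcal{U}$ yields the misalignment. The one quibble is your parenthetical on the dimension count --- the bound $\dim\mathcal{U}\ge\dim\mathrm{col}(X)-1$ certifies $\mathcal{U}\neq\{0\}$ only when $\dim\mathrm{col}(X)\ge 2$ (a line through the origin that lies neither in $\mathrm{span}(w)$ nor in $\ker(w^\top)$ meets $\ker(w^\top)$ only at $0$), so the strictness criterion you state is not quite right --- but the paper's own proof asserts nontriviality with even less justification, so you are if anything more careful than the original.
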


\begin{proof}
(\emph{SSL side}) $\|X-ZW_d\|_F^2=0$ and $\mathrm{rank}(W_d)=d$ imply $X=ZW_d$, hence
$\mathrm{col}(X)\subseteq \mathrm{col}(ZW_d)\subseteq \mathrm{col}(Z)$.

(\emph{BPR side}) For any pair $(i,j)$, by chain rule
\(
\frac{\partial \ell(w^\top(z_i-z_j))}{\partial z_i}=\ell'(w^\top(z_i-z_j))\,w
\)
and
\(
\frac{\partial \ell(w^\top(z_i-z_j))}{\partial z_j}=-\ell'(w^\top(z_i-z_j))\,w.
\)
Summing over pairs yields
$\frac{\partial \mathcal{L}_{\mathrm{BPR}}}{\partial z_i}=\alpha_i\, w\in \mathrm{span}(w)$ for some scalar $\alpha_i$.
Thus for any $v\in\ker(w^\top)$, $\langle \tfrac{\partial \mathcal{L}_{\mathrm{BPR}}}{\partial z_i}, v\rangle=0$, and
$\mathcal{L}_{\mathrm{BPR}}$ is invariant to perturbations $z_i\!\mapsto\! z_i+\delta z_i$ with $\delta z_i\in\ker(w^\top)$.

(\emph{Misalignment}) If $\mathrm{col}(X)\nsubseteq \mathrm{span}(w)$, then
$\mathcal{U}=\mathrm{col}(X)\cap\ker(w^\top)$ is nontrivial. SSL enforces $Z$ to retain information along $\mathcal{U}$
(as $\mathcal{U}\subseteq \mathrm{col}(X)\subseteq \mathrm{col}(Z)$), whereas BPR provides no constraint or gradient
signal on $\mathcal{U}$ since gradients lie in $\mathrm{span}(w)$. Hence the objectives are structurally misaligned.
Hence:
\begin{itemize}
\item \textbf{SSL objective (linear reconstruction):}
\begin{align}
\min_Z \|X - ZW_d\|^2 \quad\Rightarrow\quad \mathrm{col}(X)\subseteq \mathrm{col}(Z),\ \mathrm{rank}(Z)\ge d.
\end{align}
\item \textbf{BPR objective (cascaded ranking):}
\begin{align}
\min_Z \mathcal{L}_{\mathrm{BPR}}(Z)
\Rightarrow
& \min_Z \mathcal{L}_{\mathrm{BPR}}(Z)
\text{only the projection } w^\top Z \text{ is constrained; }\\
&\ker(w^\top)\ \text{is unconstrained/invariant.}
\end{align}
\end{itemize}
\end{proof}

\section{Hyperparameter Search Space}
\label{app:hyper}

The hyperparameter search space is defined as follows:
    \begin{itemize}
        \item Learning rate for GNN and towers: $\{0.01,\ 0.005,\ 0.001\}$.
        \item Dropout rate for GNN: $\{0.1,\ 0.3,\ 0.5,\ 0.7,\ 0.8\}$.
        \item Dropout rate for attention: $\{0.1,\ 0.3,\ 0.5,\ 0.7,\ 0.8\}$.
        \item Weight decay for GNN: $\{0,\ 1\times10^{-3},\ 5\times10^{-3},\ 8\times10^{-3},\ 1\times10^{-4},\ 5\times10^{-4},\ 8\times10^{-4}\}$.
    \end{itemize}

\end{document}